\newtheorem{theorem}{Theorem}
\newtheorem{proposition}{Proposition}
\newtheorem{corollary}{Corollary}
\icmltitlerunning{A Probabilistic Interpretation of Transformers}
\begin{document}

\twocolumn[
\icmltitle{A Probabilistic Interpretation of Transformers\\
           International Conference on Machine Learning (ICML 2021)}



\icmlsetsymbol{}{}

\begin{icmlauthorlist}
\icmlauthor{Alexander Shim}{ml}
\end{icmlauthorlist}

\icmlaffiliation{ml}{ML Collective}

\icmlcorrespondingauthor{Alexander Shim}{alex.shim@gmail.com}

\icmlkeywords{Transformers, Attention, Contrastive Learning, Exponential Families}

\vskip 0.3in
]



\printAffiliationsAndNotice{}  

\begin{abstract}
We propose a probabilistic interpretation of exponential dot product attention of transformers and contrastive learning based off of exponential families. The attention sublayer of transformers is equivalent to a gradient ascent step of the log normalizer, which is the log-sum-exp term in the Hopfield theory of attention. This ascent step induces a parallel expansion of points, which is counterbalanced by a contraction from layer normalization. We also state theoretical limitations of our theory and the Hopfield theory and suggest directions for resolution.
\end{abstract}

\section{Introduction}
\label{introduction}

Transformers have reached state of the art results in language models, significantly outperforming LSTMs. One conceptual explanation for their greater performance is the ability of attention to utilize long range dependencies, whereas Recurrent Neural Networks are limited to encoding past information within a fixed-size hidden state. 
What this explanation does not explain is how certain architectural choices of transformers, specifically exponential dot product attention, also somewhat ambiguously referred to as softmax attention, outperforms alternatives.

Exponential dot product attention has been popularized in contrastive learning and metric learning and is used in state of the art semi-supervised contrastive learning models.
In language models, an exponential dot product probability is used to model conditional probabilities in Word2Vec as well as in some memory networks. 

The successes of transformers has been verified empirically, but little work
has focused upon a theoretical framework for transformers. 
We offer a probabilistic explanation, based off of distributions of the exponential
family, for attention and contrastive probabilities. 
Expressing attention as an exponential family allows us to utilize related theory in statistics, machine learning, and statistical mechanics, offering insightful interpretations of the transformer architecture. We provide proofs for attention updates over several continuous distributions as well.

We also explicitly state the limitations of our theory, noting that the modern
Hopfield network interpretation of attention shares many of these limitations. For
some of these limitations, we speculate connections between other areas of re-
search which may reconcile the theoretical inconsistencies, motivating directions
for future research.

\section{Background}
\label{background}

\subsection{Exponential Dot Product Attention}
Word2vec used a skip-gram model to predict neighboring words using a conditional distribution defined by a normalized exponential dot product multinoulli function \cite{DBLP:journals/corr/MikolovSCCD13}

Attention was proposed through normalized exponential alignment functions, often referred to as softmax attention in literature, for Neural Machine Translation \cite{DBLP:journals/corr/Graves13,bahdanau2014neural}, and later work parallelizing computation on sequential data introduced normalized exponential dot product similarity \cite{DBLP:journals/corr/ParikhT0U16} (A Decomposable Attention Model for Natural Language Inference).
Neural Turing Machines gated memory updates using normalized exponential cosine similarity, in what is referred to as soft attention \cite{DBLP:journals/corr/GravesWD14}.

Other transformer precursors parallelized attention updates over the entire sequence into a layer and switched to convolution-based attention weights \cite{kaiser2016neural,DBLP:journals/corr/KaiserB16}. 
The transformer architecture incorporated exponential dot product attention scaled by dimensionality \cite{vaswani2017attention}. 

\subsection{Contrastive Learning and Metric learning}
\label{contrastive}
Noise-Contrastive Estimation (NCE) creates a mixture distribution between real data and noisy data to convert an unsupervised learning problem into a semi-supervised learning problem, modeling the contrastive probabilities as a parametrized logistic distribution \cite{Gutmann2010}.

In metric learning, Multidimensional scaling calculates pairwise distances between projected points \cite{cox2000multidimensional}, which for Euclidean distances are equivalent to calculating covariance matrix terms using dot products \cite{bishop2007}.
Neighborhood Components Analysis learns a low dimensional linear embedding matrix and models a probability of a neighbor by comparing the exponential negative square distance
$
  e^{ - \lVert A x_i - A x_j \rVert^2 }
$
of two input data $x_i,x_j$ to the sum of the exponential negative distances to non-neighbors \cite{conf/nips/GoldbergerRHS04}

Due to slow convergence of Bernoulli contrastive loss and triplet loss, Sohn proposed an exponential dot product probability over multiple examples \cite{Sohn2016ImprovedDM}, which is mathematically consistent with NCE for multiple distributions. The paper’s roots in metric learning motivated the dot product form, with a direct influence from Neighborhood Component Analysis.

More recent contrastive learning research adopted a contrastive loss based off of exponential dot product probabilities, including papers that achieve state of the art semi-supervised learning \cite{wu2018unsupervised,DBLP:journals/corr/abs-2002-05709}.

\subsection{Shortcut Connections and Dynamical Systems}
\label{residual connections}

Long Short-Term Memory (LSTM)  combined a shortcut connection to deal with the vanishing and exploding gradient problem along with gating functions to incorporate and forget information \cite{HochSchm97}. Residual connections similarly formulated the hidden layer as an update to an identity mapping, though without a gating mechanism \cite{he2015deep}. Recurrent Neural Networks have been interpreted as a discrete time approximation to a continuous dynamical system \cite{article}, where gating acts as a warping of time \cite{tallec2018recurrent}. Residual connections have been interpreted as a discretized update to a differential equation \cite{Weinan2017APO,lu2020finite}.

Interpreting residual networks as discretized differential equations, researchers have posed alternative methods for performing forward updates to converge to equilibrium points and backwards updates to the parameters from the equilibrium points \cite{chen2019neural,bai2019deep}. Further work has used monotone operator theory in convex analysis for solving for equilibrium points, interpreting layers as an operator \cite{winston2021monotone}.

Mathematically similar to our work, transformers have been interpreted as an update of modern hopfield networks and fixed points have been calculated with respect to a fixed set of patterns \cite{DBLP:journals/corr/abs-2008-02217}.  Our work similarly views the attention sublayer as an operator update over a class of discretized probability distribution, though with a changing set of patterns.

\subsection{Log Normalizer and Free Energy}
\label{log normalizer}

Partition functions, or the normalizer function, in statistical physics defines a normalization factor of the Hamiltonian with respect to a parameter defining the temperature. The Boltzmann distribution can be derived through Lagrange multipliers as the distribution which maximizes entropy with a conservation of energy constraint. Jaynes adapted the Boltzmann distributions to maximum entropy distributions with multiple expected statistics constraints by converting the maximum entropy problem into the dual problem of optimizing the log normalizer \cite{1456693}, which is known in statistical mechanics as free energy.

Variational methods have been used to approximate log probabilities of observations in machine learning, borrowing from ideas in statistical mechanics. By viewing the joint as an unnormalized probability distribution, the log normalizer is known as the evidence lower-bound, and it has connections to Helmholtz Free Energy \cite{Hinton:1995,koller2009probabilistic}.

The sum of exponents loss of AdaBoost \cite{collins00logistic} has been interpreted as the dual form of generalized KL divergence. The log sum of exponents is well known in convex optimization to be the dual form to the maximum entropy objective for a discrete probability distribution \cite{citeulike:163662}. Notably, in the modern Hopfield network interpretation of transformers the log sum of exponents is used as part of the energy function and solved through convex optimization techniques \cite{DBLP:journals/corr/abs-2008-02217}.

In the work most similar to ours, McBal suggested an energy-based model interpretation of transformers, with separate energy updates for each attention sublayer and fully-connected layer, as well as additional ideas from statistical physics \cite{bal_2020}.

\section{Exponential Dot Product Attention}
\label{exponential dot product attention}

\subsection{Exponential Families}
\label{exponential families}
The natural parameter form, also known as the canonical form or a linear exponential family, of a distribution of the exponential family can be written as
\begin{equation}
  p(x \vert \eta) = 
    \frac{1}{Z \left( \eta \right) }
    h(x)
    e^{ u(x)^\intercal \eta}
  \label{exponential family}
\end{equation}
where $x$ is the random variable, 
$u(x)$ the sufficient statistic, 
$\eta$ the natural parameter, 
$h(x)$ the intrinsic measure or carrier measure, 
and 
$Z ( \eta ) := \int h(x) e^{ u(x)^\intercal \eta }$ the normalizer or partition function.
An exponential family distribution can be generated from an intrinsic measure $h(x)$ by defining the sufficient statistic $u(x) := x$, and defining the normalizer as the Laplace transform applied to the intrinsic measure:
\begin{equation}
  Z ( \eta ) = \mathcal{L} \circ h(x) = \int h(x) e^{ x^\intercal \eta} dx
  \label{Laplace transform}
\end{equation}

When $h(x)$ is chosen to be a uniform discrete measure over a finite set of points 
$\{x_n\}_{n=1}^N $
in a continuous space, the probability density converts to a probability mass function, and the normalizer is the summation 
$Z( \eta ) = \sum_{n=1}^N e^{ u(x_n)^\intercal \eta} $
instead of an integral. 

When the query $q_i$ is chosen as the natural parameter and the keys as the finite set of points $\{ k_j \}_{j=1}^N$, exponential dot product attention weights are of the form of an exponential family
\begin{equation}
  p( k_j \vert q_i ) = 
  \frac
    { e^{k_j \cdot q_i } }
    { Z(q_i) }
  \label{exponential family attention}
\end{equation}

The expected sufficient statistic of an exponential family can be written as a one-to-one function of the natural parameter
\begin{equation}
  E_{ x \sim P( x \vert \eta) } \left[ u(x) \right] =
  \nabla_\eta \log Z(\eta)
  \label{activation function}
\end{equation}
For the exponential family defined by attention, we observe that attention averaging is exactly the gradient of the log normalizer

We further prove results for attention updates of non-discrete $h(x)$. Proofs and further results are given in the appendix.

\begin{proposition}
  Let $X = R^D$, $h: X \rightarrow R^{+}$. 
  \begin{enumerate}[(a)]
    \item If $h(x) := \sum_{n=1}^N \delta( x = x_n)$, then \\
    $
      \nabla_\eta \log \int h(x) e^{ x^\intercal \eta } dx 
      = \sum_{n=1}^N \frac{ e^{ x_n^\intercal \eta } }{ \sum_{n'} e^{ x_{n'}^\intercal \eta } } x_n 
    $.
    \item If $h(x) = p_0( x \vert \eta_1, \eta_2)$, where $p_0(x \vert \eta_1,\eta_2)$ is the exponential family distribution $p_0(x \vert \eta_1,\eta_2) = \frac{1}{Z_0(\eta}) h_0(x) e^{x^\intercal \eta} e^{u_2(x)^\intercal \eta_2}$, with sufficient statistic $u_1(x) = x$ and arbitrary sufficient statistic $u_2(x)$, natural parameters $(\eta_1,\eta_2)$, intrinsic measure $h_0(x)$, and normalizer $Z_0(\eta_1,\eta_2) = \int h_0(x) e^{ x^\intercal \eta_1 } e^{ u_2(x)^\intercal \eta_2 } dx$, then $\nabla_{_\eta} \log \int h(x) e^{ x^\intercal \eta} = E_{x \sim p_0(x \vert \eta_1 + \eta, \eta_2)} \left[ x \right]$
  \end{enumerate}
  \label{exponential family gradient updates}
\end{proposition}

\begin{corollary}
  If $h(x) = \mathcal{N}( x; \mu,\Sigma)$, then \\
  $\nabla_\eta \log \int h(x) e^{ x^\intercal \eta} dx = \mu + \Sigma \eta$.
  \label{cor gaussian update}
\end{corollary}

When the attention sublayer is added to the residual connection, we observe that we are performing a gradient update of the log normalizer with respect to the natural parameters, which are the hidden states.

The log normalizer for this discrete distribution is the log sum of exponents, which is a component of the modern Hopfield energy function, where the attention sublayer also acts as an update for the hidden states
\begin{equation}
  \log Z(q_i) = \log \sum_{j=1}^N e^{ k_j \cdot q_i}
  \label{free energy}
\end{equation}

\subsection{Log normalizer of exponential families}

The set of queries can be seen as IID samples from a distribution of natural parameters \footnote{McBal refers to attention as a mean-field approximation}, meaning our gradient operator is updating a distribution of distributions. In information geometry, natural parameters are naturally mapped to their exponential family distributions, specifically when the Bregman divergence $B_G(\eta_1,\eta_2)$ is defined by the log normalizer $G(\eta) = \log Z(\eta)$, equaling $KL \left( p(x \vert \eta_1) \Vert p(x \vert \eta_2) \right)$. 

The Bregman divergence of the convex conjugate of the log normalizer is also equivalent to KL divergence, and the dual variable is exactly the expected sufficient statistic $\bar{u}$, relating back to the activation function \eqref{activation function}.

\begin{equation}
  \begin{split}
    G^*(\eta^*) = \sup_\eta \left( \eta^\intercal \eta^* - G(\eta) \right)\\
    \eta^* = \bar{u} = \nabla_\eta \log Z(\eta)
  \end{split}
  \label{Fenchel duality}
\end{equation}

$G^*( \bar{u} ) = -H( p( x \vert \bar{u} ) - E_{ x \sim p( x \vert \bar{u} ) } \left[ \log h(x) \right]$ is an offset of the negative entropy function acting on the exponential family distribution with expected sufficient statistic $\bar{u}$, meaning the dual problem of attention is a variant of the maximum entropy problem.
More importantly, the duality between natural parameters and expected sufficient statistics can be used to transform a distribution of natural parameters, that is, our query space, into a distribution of expected sufficient statistics, that is, our key space, through our activation function. Unfortunately, the activation function is not always an affine function, but if we attempt an affine approximation, we recover $\nabla^2 \log Z(\eta) = E_{ x \sim p(x)} \left[ u(x) u(x)^\intercal \right]$, the Fisher information matrix.

When our hidden states is our natural parameter distribution, our affine approximation for the key corresponding to $h_j$ is $\Sigma^{-1} h_j$, exponential dot product attention is proportional to $ e^{h_i^T \Sigma^{-1} h_j} $, suggesting our key and query weights $W_q^\intercal W_k \approx \Sigma^{-1}$.

\subsection{Expansion and contraction}
\label{expansion and contraction}

The attention sublayer outputs a convex combination of the keys, $\sum_{n=1}^N A(k_n,q) k_n$. Without a residual connection, repeated applications of attention would contract the hidden states into the interior, intuitively risking a collapse to a single fixed point.

With a residual connection, assuming our hidden states are recentered around the origin through layer normalization or some other normalization, we could intuitively imagine that roughly radially symmetric hidden states would push each hidden state further away from the origin.
Since the log normalizer is the dual form of the maximum entropy distribution, our log normalizer ascent should result in increased entropy, resulting in an expansion of the hidden states away from their starting points.

We prove that at equilibrium this result holds for uncentered Gaussian distributions. The more formal statement and proof are given in the appendix.

\begin{theorem}
  (Informal) $p_{eq}(\eta) = \mathcal{N}( \Sigma^{-1} \mu, \Sigma^{-1} )$ is an equilibrium distribution of the renormalization operator $RN$ with mean $\Sigma^{-1} \mu$ and covariance $\Sigma^{-1}$,  composed with the attention update operator, assuming intrinsic measure $h(x): \eta \rightarrow \Sigma \eta$.
  \label{informal theorem equilibrium}
\end{theorem}

\section{Operator Interpretation}
\label{operator interpretation}

Starting with a hidden state of $N$ tokens, our attention heads are simultaneous gradient updates on each token. 
We can interpret the tokens as a discretization of a distribution, drawing parallels to the empirical distribution defined by sampling, with the attention operator acting as a pointwise gradient update of the distribution of natural parameters.
Two interesting cases of deriving an exponential family from an intrinsic measure are when a conjugate prior $ p( \eta \vert \xi, \nu )$ is defined as the intrinsic measure, and in boosting where the previous model defines an intrinsic measure of unnormalized weights by $w_i = e^{ - t_i \eta_n(x_i) } $, where $\eta_n$ is the previous iteration of the model logits \cite{collins00logistic}.

The FC layer can be viewed as a discrete approximation to an operator as well, so the entire attention block can be viewed as a distributional operator.  If different attention blocks have tied weights, then it is the same operator applied repeatedly, otherwise the operators for each layer change.

\section{Theoretical Limitations}
\label{limitations}
Both this work and the Hopfield interpretation of transformers initially ignore the weight matrices, which are the only parameters that the network learns for the attention sublayer. 
With transformer models having hundreds of millions to a billion parameters and being difficult to train, this oversight is deeply problematic. 
The simplest interpretation is for the weight matrices to represent different distortions of space, in which case different attention heads would represent different distortions.
Unfortunately, this makes multihead attention incoherent, as we would be combining gradient updates from different spatial transformations. We present a somewhat more consistent lower bound approximation of the log normalizer of a mixture of Gaussians in the appendix.

For this work, we have interpreted a single update on a single hidden state as a gradient update. 
However, it is less clear why each hidden state has a separate gradient update, and since they depend on the other hidden states, they are not independent updates. The Ising model, which motivated the original Hopfield network, has similar properties. Sample-based approximate inference techniques for Sequential Monte Carlo, particularly ones utilizing bootstrapping, are dependent on the samples and qualitatively similarly use $N$ samples to generate $N$ new samples for the next time step.

The FC layer is inconsistent with both the Hopfield theory of attention and this paper's gradient update of the log normalizer. We would have to view it as a separate operator update composed on top of our attention operator.

\section{Future Work}
\label{future work}
We have provided a foundation of a probabilistic interpretation, and we specifically interested in utilizing ideas of statistical sampling and Sequential Monte Carlo over hybrid distributions. 

The Hopfield theory of transformers solved for equilibrium behavior, but specifically when the patterns were fixed, whereas for transformers the patterns are the dynamically changing hidden states.
A useful experiment would be to sample from an initial distribution and observe how distributions change with each layer, especially with learned weights.

Since the theoretical interpretations ignore the FC layer, it would be useful to know how close the FC layer is to an identity mapping or a purely linear layer, if at all.

Assuming layer normalization should induce an equilibrium distribution of a Gaussian distribution, we could potentially try other contractive mappings to see if this generates substantially different embeddings and layer behavior.

One key difference in theories is that in this work, the residual connection is used as an initial point for a gradient update, whereas the Hopfield theory of attention defines a quadratic term in the Hopfield energy, strongly suggesting that we are working with Gaussian distributions. 
Since exponential dot product probabilities may come from locally Euclidean assumptions in metric learning, weighted attention may be making inherent assumptions about Gaussian distributions or mixtures of Gaussians over the key space.

Non-Gaussian exponential families will not map the natural parameter to the expected sufficient statistic through an affine transformation. In those cases, we would need to apply a non-linear activation function to transform the natural parameter space into the sufficient statistic space before we take the dot product. An affine approximation of the activation function is the Fisher information matrix, so we could test to see if the query and key weight matrices combine to an approximation of the Fisher information matrix.

\bibliography{prob_transformers}
\bibliographystyle{icml2021}

\end{document}


\onecolumn[
\icmltitle{A Probabilistic Interpretation of Transformers}



\icmlsetsymbol{}{}

\begin{icmlauthorlist}
\icmlauthor{Alexander Shim}{ml}
\end{icmlauthorlist}

\icmlaffiliation{ml}{ML Collective}

\icmlcorrespondingauthor{Alexander Shim}{alex.shim@gmail.com}

\icmlkeywords{Machine Learning, ICML}

\vskip 0.3in
]



\printAffiliationsAndNotice{}  

\bibliography{example_paper}
\bibliographystyle{icml2021}

\appendix

\section{Proofs}
\label{proofs}

\begin{proposition}
  Let $X = R^D$, $h: X \rightarrow R^{+}$. 
  \begin{enumerate}[(a)]
    \item If $h(x) := \sum_{n=1}^N \delta( x = x_n)$, then 
    $
      \nabla_\eta \log \int h(x) e^{ x^\intercal \eta } dx 
      = \sum_{n=1}^N \frac{ e^{ x_n^\intercal \eta } }{ \sum_{n'} e^{ x_{n'}^\intercal \eta } } x_n 
    $.
    \item If $h(x) = p_0( x \vert \eta_1, \eta_2)$, where $p_0(x \vert \eta_1,\eta_2)$ is the exponential family distribution $p_0(x \vert \eta_1,\eta_2) = \frac{1}{Z_0(\eta}) h_0(x) e^{x^\intercal \eta} e^{u_2(x)^\intercal \eta_2}$, with sufficient statistic $u_1(x) = x$ and arbitrary sufficient statistic $u_2(x)$, natural parameters $(\eta_1,\eta_2)$, intrinsic measure $h_0(x)$, and normalizer $Z_0(\eta_1,\eta_2) = \int h_0(x) e^{ x^\intercal \eta_1 } e^{ u_2(x)^\intercal \eta_2 } dx$, then $\nabla_{_\eta} \log \int h(x) e^{ x^\intercal \eta} = E_{x \sim p_0(x \vert \eta_1 + \eta, \eta_2)} \left[ x \right]$
  \end{enumerate}
  \label{exponential family gradient updates}
\end{proposition}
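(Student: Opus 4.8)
The plan is to treat the two parts separately, since part (a) concerns a (non-absolutely-continuous) empirical measure while part (b) concerns a genuine exponential-family density; in both cases the computation reduces to differentiating a log-normalizer.

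For part (a), I would first note that since $e^{x^\intercal \eta}$ is smooth, $\int \delta(x = x_n)\, e^{x^\intercal \eta}\,dx = e^{x_n^\intercal \eta}$, hence $\int h(x)\, e^{x^\intercal\eta}\,dx = \sum_{n=1}^N e^{x_n^\intercal \eta}$. Then a direct computation of $\nabla_\eta \log \sum_{n} e^{x_n^\intercal \eta}$, using $\nabla_\eta e^{x_n^\intercal \eta} = x_n\, e^{x_n^\intercal \eta}$ together with the chain rule for $\log$, yields the stated softmax-weighted average $\sum_n \frac{e^{x_n^\intercal\eta}}{\sum_{n'} e^{x_{n'}^\intercal\eta}} x_n$. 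This part is entirely routine.

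For part (b), the key observation is that tilting $p_0(\cdot \mid \eta_1,\eta_2)$ by $e^{x^\intercal\eta}$ merely shifts the first natural parameter: $\int h(x)\, e^{x^\intercal\eta}\,dx = \frac{1}{Z_0(\eta_1,\eta_2)}\int h_0(x)\, e^{x^\intercal(\eta_1+\eta)}\, e^{u_2(x)^\intercal\eta_2}\,dx = \frac{Z_0(\eta_1+\eta,\eta_2)}{Z_0(\eta_1,\eta_2)}$. Taking logarithms and then $\nabla_\eta$ annihilates the $\eta$-independent denominator $\log Z_0(\eta_1,\eta_2)$, leaving $\nabla_\eta \log Z_0(\eta_1+\eta,\eta_2)$. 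I would then invoke (or quickly re-derive) the standard exponential-family identity $\nabla_{\eta_1}\log Z_0(\eta_1,\eta_2) = E_{x\sim p_0(x\vert\eta_1,\eta_2)}[u_1(x)] = E_{x\sim p_0(x\vert\eta_1,\eta_2)}[x]$, which follows by differentiating $\log Z_0$ under the integral sign and using $u_1(x)=x$; composing with the affine shift $\eta_1 \mapsto \eta_1+\eta$ via the chain rule gives $E_{x\sim p_0(x\vert \eta_1+\eta,\eta_2)}[x]$, as claimed.

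The only genuine subtlety — the ``hard part,'' such as it is — is justifying the interchange of $\nabla_{\eta_1}$ and $\int dx$ in the log-normalizer identity; this is classical and holds whenever $\eta_1$ lies in the interior of the natural parameter domain, so that $Z_0$ is finite on a neighborhood and the integrand has an integrable dominating function, and I would simply state this regularity assumption rather than belabor it. I would also silently correct the evident typographical slips in the statement of (b): the normalizer should read $Z_0(\eta_1,\eta_2)$ and the first tilting factor should be $e^{x^\intercal\eta_1}$.
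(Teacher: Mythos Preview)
Your proposal is correct and essentially identical to the paper's proof: part (b) matches step for step (tilting shifts $\eta_1$, the ratio $Z_0(\eta_1+\eta,\eta_2)/Z_0(\eta_1,\eta_2)$, the $\eta$-independent denominator drops, then the log-normalizer identity). The only cosmetic difference is in part (a): where you compute $\nabla_\eta \log \sum_n e^{x_n^\intercal\eta}$ by direct calculus, the paper phrases the same computation probabilistically, writing $\Pr\{x=x_n\}=e^{x_n^\intercal\eta}/\sum_{n'}e^{x_{n'}^\intercal\eta}$ and then invoking $\nabla_\eta\log Z(\eta)=E[x]$ --- but this is the same one-line argument in different clothing.
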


\begin{proof}
  \begin{enumerate}[(a)]
    \item $Pr \{ x = x_n \} = \frac{ e^{ x_n^\intercal \eta} }{ \sum_{n'} e^{x_{n'}^\intercal \eta } }$, so $\nabla_\eta \log Z(\eta) = E_{x \sim p( x \vert \eta) } \left[ x \right] = \sum_n \frac{ e^{x_n^\intercal \eta } }{ \sum_{n'} e^{x_{n'}^\intercal \eta } } x_n $. \\
    \item We collect exponential terms into a distribution of the same exponential family as $p_0(x \vert \eta_1,\eta_2)$

  \begin{equation}
    \begin{split}
      \int p_0(x) e^{ x^\intercal \eta} dx 
        &= \frac{1}{Z_0(\eta_1,\eta_2)} \int h_0(x) e^{ x^\intercal ( \eta_1 + \eta_2) } e^{u_2(x)^\intercal \eta_2} dx\\
      &= \frac{Z_0(\eta_1+\eta,\eta_2)}{Z_0(\eta_1,\eta_2)} \\
    \end{split}
    \label{}
  \end{equation}

  When evaluating the score, the denominator term has no dependence on $\eta$

  \begin{equation}
    \begin{split}
      \nabla_\eta \log \int p(x \vert \eta_1,\eta_2) e^{ x^\intercal \eta)} dx
	&= \nabla_\eta \log \frac{Z_0(\eta_1+\eta),\eta_2}{Z_0(\eta_1,\eta_2)} \\
      &= \nabla_\eta \log Z_0( \eta_1 + \eta,\eta_2) \\
      &= E_{ x \sim p( x \vert \eta_1 + \eta, \eta_2)} \left[ x \right]
    \end{split}
  \end{equation} 
  \end{enumerate}
\end{proof}

\begin{corollary}
  If $h(x) = \mathcal{N}( x; \mu,\Sigma)$, then $\nabla_\eta \log \int h(x) e^{ x^\intercal \eta} dx = \mu + \Sigma \eta$.
  \label{cor gaussian update}
\end{corollary}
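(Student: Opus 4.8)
The plan is to obtain this as an immediate special case of Proposition~\ref{exponential family gradient updates}(b), applied to the multivariate Gaussian written in exponential-family form. First I would recall that $\mathcal{N}(x;\mu,\Sigma)$ is an exponential family distribution with sufficient statistics $u_1(x) = x$ and $u_2(x) = \mathrm{vec}(xx^\intercal)$, intrinsic measure $h_0(x) \equiv 1$ (the constant $(2\pi)^{-D/2}$ being absorbed into the normalizer), and natural parameters $\eta_1 = \Sigma^{-1}\mu$ and $\eta_2 = -\tfrac12\,\mathrm{vec}(\Sigma^{-1})$. This identification is the one routine bookkeeping step; once it is in place the corollary is purely formal.

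Then part (b) of the Proposition gives directly $\nabla_\eta \log \int \mathcal{N}(x;\mu,\Sigma)\, e^{x^\intercal \eta}\, dx = E_{x \sim p_0(x \mid \eta_1 + \eta, \eta_2)}[x]$. The key observation is that the exponential tilt $e^{x^\intercal\eta}$ perturbs only the linear natural parameter, $\eta_1 \mapsto \eta_1 + \eta$, and leaves the quadratic parameter $\eta_2$ — hence the precision matrix $\Sigma^{-1}$ and the covariance $\Sigma$ — unchanged. Therefore $p_0(x \mid \eta_1 + \eta, \eta_2)$ is Gaussian with covariance $\Sigma$ and new mean $\mu'$ satisfying $\Sigma^{-1}\mu' = \eta_1 + \eta = \Sigma^{-1}\mu + \eta$, i.e. $\mu' = \mu + \Sigma\eta$. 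Taking $E[x]$ under this distribution yields $\mu + \Sigma\eta$, as claimed.

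As an alternative derivation, or a cross-check, I would compute the integral directly: $\int \mathcal{N}(x;\mu,\Sigma)\, e^{x^\intercal\eta}\, dx$ is the moment generating function of a Gaussian evaluated at $\eta$, which by completing the square equals $\exp\!\big(\mu^\intercal\eta + \tfrac12\,\eta^\intercal\Sigma\eta\big)$; applying $\nabla_\eta\log(\cdot)$ to this one-line expression returns $\mu + \Sigma\eta$ immediately, bypassing the exponential-family machinery entirely.

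I do not expect any genuine obstacle. The only point requiring care is the exponential-family accounting for the Gaussian — keeping the vectorization convention for $xx^\intercal$ and the factor $-\tfrac12$ consistent so that "$\eta_1 \mapsto \eta_1 + \eta$ with $\eta_2$ fixed" really does correspond to "mean shifts by $\Sigma\eta$, covariance unchanged." If one prefers to sidestep that entirely, the direct moment-generating-function computation is completely elementary and self-contained.
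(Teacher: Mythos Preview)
Your proposal is correct and follows essentially the same route as the paper: write the Gaussian in exponential-family form with $\eta_1=\Sigma^{-1}\mu$ and $\eta_2$ encoding $-\tfrac12\Sigma^{-1}$, apply Proposition~\ref{exponential family gradient updates}(b), and read off the mean of the tilted Gaussian as $\mu+\Sigma\eta$. Your additional MGF cross-check is not in the paper but is a sound alternative derivation.
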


\begin{proof}
  The natural parameter form of a Gaussian is $\frac{1}{Z(\eta_1,\eta_2)} e^{ \left< x, \Sigma^{-1} \mu \right> + \left< x x^\intercal, - \frac{\Sigma^{-1}}{2}\right>}$, where $\left< \, . \, , \, . \, \right>$ denotes the vectorized dot product for both vectors and matrices, with $\eta_1 = \Sigma^{-1} \mu$. Conversely, $\mu = \Sigma \eta_1$. \\
  Hence, $E_{ x \sim p( x \vert \Sigma^{-1} \mu + \eta, - \frac{\Sigma^{-1}}{2}) } \left[ x \right] = \mu + \Sigma \eta$
\end{proof}

\begin{proposition}
  If $h(x) = \sum_{n=1}^N \pi_n \mathcal{N}(\mu_n,\Sigma )$, where $\pi_n \in R^{+}$, then $\nabla_\eta \log \int h(x) e^{ x^\intercal \eta } dx = \Sigma \eta + \sum_{n=1}^N \frac{ \pi_n e^{\mu_n^\intercal \eta} }{ \sum_{n'=1}^N \pi_{n'} e^{ \mu_{n'}^\intercal \eta} } \mu_n$
  \label{rbf update}
\end{proposition}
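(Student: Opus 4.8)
The plan is to exploit linearity of the integral in the mixture weights and then reduce to the single-Gaussian case already handled by Corollary \ref{cor gaussian update}. Writing $Z(\eta) := \int h(x)\, e^{x^\intercal \eta}\,dx$, linearity gives $Z(\eta) = \sum_{n=1}^N \pi_n \int \mathcal{N}(x;\mu_n,\Sigma)\, e^{x^\intercal \eta}\,dx$, so the real task is to evaluate each single-component integral in closed form, not merely its logarithmic derivative.

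The key computation is the Gaussian moment generating function $\int \mathcal{N}(x;\mu,\Sigma)\, e^{x^\intercal\eta}\,dx = \exp\!\big(\mu^\intercal\eta + \tfrac12 \eta^\intercal \Sigma \eta\big)$. One can obtain this directly by completing the square in the exponent, or from Corollary \ref{cor gaussian update}: that corollary gives the score as $\mu + \Sigma\eta$, and integrating this gradient in $\eta$ (using that the integral equals $1$, hence its log equals $0$, at $\eta=0$) recovers $\log \int \mathcal{N}(x;\mu,\Sigma)\, e^{x^\intercal\eta}\,dx = \mu^\intercal\eta + \tfrac12 \eta^\intercal\Sigma\eta$.

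Substituting this into the sum and using that all components share the same $\Sigma$, the quadratic factor $e^{\frac12 \eta^\intercal \Sigma \eta}$ is common to every term and pulls outside the summation, so $Z(\eta) = e^{\frac12 \eta^\intercal \Sigma \eta}\sum_{n=1}^N \pi_n e^{\mu_n^\intercal\eta}$. Taking logarithms turns the product into a sum, $\log Z(\eta) = \tfrac12 \eta^\intercal\Sigma\eta + \log\sum_{n=1}^N \pi_n e^{\mu_n^\intercal\eta}$, and differentiating gives $\nabla_\eta \log Z(\eta) = \Sigma\eta + \frac{\sum_n \pi_n e^{\mu_n^\intercal\eta}\mu_n}{\sum_{n'}\pi_{n'} e^{\mu_{n'}^\intercal\eta}}$, which is exactly the claimed expression with its softmax-over-means structure.

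There is no serious obstacle here; the one thing to flag is that the shared-covariance hypothesis is precisely what lets the quadratic term factor out of the mixture. If the components had distinct $\Sigma_n$, the quadratic exponentials would not separate from the weights, and one would instead obtain a $\Sigma_n$-dependent reweighting of the per-component updates $\mu_n + \Sigma_n\eta$, so the clean softmax form is special to the common-$\Sigma$ setting. A minor formality is justifying the interchange of $\nabla_\eta$ with the finite sum (and differentiation under the integral sign in the MGF step), but with $\pi_n > 0$ and Gaussian integrands this is routine and I would dispatch it in a sentence.
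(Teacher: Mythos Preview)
Your proof is correct and follows essentially the same route as the paper: both compute the single-component integral $\int \mathcal{N}(x;\mu_n,\Sigma)\,e^{x^\intercal\eta}\,dx = e^{\mu_n^\intercal\eta + \frac{1}{2}\eta^\intercal\Sigma\eta}$, sum over $n$, factor out the shared quadratic term to obtain $Z(\eta) = e^{\frac{1}{2}\eta^\intercal\Sigma\eta}\sum_n \pi_n e^{\mu_n^\intercal\eta}$, and then differentiate the log. The only cosmetic difference is that the paper obtains the single-Gaussian integral via the exponential-family normalizer ratio $Z_0(\mu_n+\Sigma\eta,\Sigma)/Z_0(\mu_n,\Sigma)$ (in the spirit of Proposition~\ref{exponential family gradient updates}(b)) rather than invoking the Gaussian MGF or integrating the score from Corollary~\ref{cor gaussian update}.
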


\begin{proof}
  For a Gaussian $\mathcal{N}(x;\mu,\Sigma)$ for $x \in R^d$, the exponential family normalizer is $(2 \pi)^{\frac{D}{2}} \lvert \Sigma \rvert ^{\frac{1}{2}} e^{\frac{1}{2} \mu^\intercal \Sigma^{-1} \mu }$. \\
  \begin{equation}
    \begin{split}
      \int \sum_{n=1}^N \pi_n \mathcal{N}(x;\mu_n,\Sigma) e^{x^\intercal \eta} dx &= \sum_{n=1}^N \frac{1}{Z(\mu_n,\Sigma)} \pi_n \int e^{ \left< x x^\intercal, - \frac{1}{2} \Sigma^{-1} \right> + \left<x, \Sigma^{-1} \mu_n \right> + x^\intercal \eta} dx \\
      &= \sum_{n=1}^N \pi_n \frac{Z(\mu_n + \Sigma \eta,\Sigma)}{Z(\mu_n,\Sigma)} \\
      &= \sum_{n=1}^N \pi_n e^{\frac{1}{2} (\mu_n + \Sigma \eta)^\intercal \Sigma^{-1} (\mu_n + \Sigma \eta) - \frac{1}{2} \mu_n^\intercal \Sigma^{-1} \mu_n} \\
      &= e^{\frac{1}{2} \eta^\intercal \Sigma \eta} \sum_{n=1}^N \pi_n e^{\mu_n^\intercal \eta}
    \end{split}
    \label{rbf normalizer}
  \end{equation}

  The gradient of the log normalizer simplifies to

  \begin{equation}
    \begin{split}
      \nabla_\eta \log Z(\eta) = \Sigma \eta + \nabla_\eta \log \sum_{n=1}^N \frac{ \pi_n e^{\mu_n^\intercal \eta} }{ \sum_{n'=1}^N \pi_{n'} e^{\mu_{n'}^\intercal \eta }} \mu_n
    \end{split}
    \label{rbf update eq}
  \end{equation}

  which when $\pi_n$ are uniform $\forall n$ is the sum of the discrete attention update and the $\Sigma \eta$ term.
\end{proof}

If our intrinsic measure is a mixture of Gaussians with different covariance matrices, then we can perform a gradient update on a lower bound, $G_{LB}(\eta)$, of the log normalizer.

\begin{proposition}
  If $h(x) = \sum_{n=1}^N \pi_n \mathcal{N} \left( \mu_n, \Sigma_n \right)$, then there exists a lower bound $G_{LB} = \sum_{n=1}^N \pi_n \left( \frac{1}{2} \eta^\intercal \Sigma \eta + \mu_n^\intercal \eta \right) \leq \log Z(\eta)$, with gradient update $\nabla_\eta G_{LB}(\eta) = \sum_{n=1}^N \pi_n ( \mu_n + \Sigma_n \eta )$
  \label{}
\end{proposition}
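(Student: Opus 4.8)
The plan is to reduce the mixture integral to the single-Gaussian Laplace-type integral already evaluated inside the proof of Proposition~\ref{rbf update}, and then apply concavity of the logarithm. First I would observe that the computation in Equation~\eqref{rbf normalizer} only used that each component is Gaussian, not that the covariances coincide, so component by component $\int \mathcal{N}(x;\mu_n,\Sigma_n)\, e^{x^\intercal \eta}\, dx = e^{\frac{1}{2}\eta^\intercal \Sigma_n \eta + \mu_n^\intercal \eta}$, and hence
\[
  Z(\eta) = \int h(x)\, e^{x^\intercal \eta}\, dx = \sum_{n=1}^N \pi_n\, e^{\frac{1}{2}\eta^\intercal \Sigma_n \eta + \mu_n^\intercal \eta}.
\]

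Next, writing $f_n(\eta) := \frac{1}{2}\eta^\intercal \Sigma_n \eta + \mu_n^\intercal \eta$ and assuming the weights are normalized ($\sum_n \pi_n = 1$; otherwise one divides through, which shifts $\log Z$ by the constant $\log \sum_{n'} \pi_{n'}$ and leaves the gradient unchanged), Jensen's inequality applied to the concave function $\log$ gives
\[
  \log Z(\eta) = \log \sum_{n=1}^N \pi_n\, e^{f_n(\eta)} \;\ge\; \sum_{n=1}^N \pi_n f_n(\eta) \;=:\; G_{LB}(\eta),
\]
which is exactly the asserted lower bound (reading the $\Sigma$ in the statement as $\Sigma_n$). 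Finally I would differentiate $G_{LB}$ term by term: since each $\Sigma_n$ is symmetric, $\nabla_\eta\!\left(\frac{1}{2}\eta^\intercal \Sigma_n \eta\right) = \Sigma_n \eta$ and $\nabla_\eta(\mu_n^\intercal \eta) = \mu_n$, so $\nabla_\eta G_{LB}(\eta) = \sum_{n=1}^N \pi_n (\mu_n + \Sigma_n \eta)$, which is the convex combination of the per-component Gaussian updates from Corollary~\ref{cor gaussian update}.

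The only real subtlety — the ``main obstacle,'' such as it is — is the bookkeeping around normalization of the $\pi_n$: Jensen requires a probability vector, so one must either assume $\sum_n \pi_n = 1$ or explicitly absorb $\log \sum_{n'} \pi_{n'}$ into an additive constant that is irrelevant to the gradient. Beyond that, every step is either the Gaussian integral reused from the previous proposition or elementary matrix calculus, so I expect the proof to be short.
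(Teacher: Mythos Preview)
Your proposal is correct and follows essentially the same route as the paper: evaluate each Gaussian integral as $e^{\frac{1}{2}\eta^\intercal \Sigma_n \eta + \mu_n^\intercal \eta}$ via the computation of Proposition~\ref{rbf update}, apply Jensen's inequality for the concave logarithm to the discrete mixture, and differentiate term by term. Your explicit remark about normalizing the $\pi_n$ (or absorbing $\log\sum_{n'}\pi_{n'}$ into a gradient-irrelevant constant) is a point the paper glosses over but is indeed needed for Jensen to apply as stated.
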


\begin{proof}
  Using Jensen's inequality on the concave logarithm function,

  \begin{equation}
    \log \int \sum_{n=1}^N \pi_n \mathcal{N} \left( x; \mu_n, \Sigma_n \right) e^{ x^\intercal \eta} dx \geq \sum_{n=1}^N \pi_n \log \int \mathcal{N} ( x; \mu_n, \Sigma_n ) e^{ x^\intercal \eta} dx
    \label{Jensen's inequality}
  \end{equation}

  From \eqref{rbf update eq}, 

  \begin{equation}
    = \sum_{n=1}^N \pi_n \left( \frac{1}{2} \eta^\intercal \Sigma_n \eta + \mu_n^\intercal \eta \right)
    \label{lower bound}
  \end{equation}

  If we perform a gradient update on the lower bound, we have
  
  \begin{equation}
    \nabla_\eta G_{LB}(\eta) = \sum_{n=1}^N \pi_n ( \mu_n + \Sigma_n \eta )
    \label{lower bound update}
  \end{equation}

\end{proof}

Using Corollary $\autoref{cor gaussian update}$, we can apply a generalization of the attention sublayer to update a distribution $p(\eta)$ of natural parameters given a Gaussian intrinsic measure, $h(x) = \mathcal{N}(x;\mu,\Sigma)$. Moreover, if we assume there is a one-to-one transformation of the natural parameters into the intrinsic measure through the activation function, we can prove the stationarity of the attention update operator composed with a renormalization:

\begin{theorem}
  Suppose the distribution of natural parameters is $p_{eq}(\eta) = \mathcal{N}( \Sigma^{-1} \mu, \Sigma^{-1} )$ and we define the intrinsic measure $h(x)$ by a pointwise transformation $ x = \Sigma \eta$. Let $RN_{\eta_0,\Lambda_0}$ be the renormalization operator such that if $E_{ \eta \sim p(\eta) } \left[ ( \eta, \eta \eta^\intercal ) \right] = (\bar{\eta}, \bar{\Sigma}) $, $RN_{\eta_0,\Lambda_0}$ maps $\eta \rightarrow \eta_0 + \Lambda_0^{\frac{1}{2}}\bar{\Sigma}^{-\frac{1}{2}} (\eta - \bar{\eta})$. 
  Then the composition of renormalization operator and the attention update operator $RN_{\Sigma^{-1}\mu,\Sigma^{-1}} \circ A \circ ( p_{eq}(\eta), h(x) ) = p_{eq}(\eta)$ . 
  \label{theorem equilibrium}
\end{theorem}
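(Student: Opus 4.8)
The plan is to exploit two structural facts that together make the claim nearly immediate. First, because the intrinsic measure is Gaussian, Corollary~\ref{cor gaussian update} says the attention update is \emph{affine} in the query $\eta$, so it carries Gaussian laws to Gaussian laws. Second, the renormalization operator $RN_{\eta_0,\Lambda_0}$ is, by construction, a whiten-then-rescale map that sends \emph{any} non-degenerate law to one with mean $\eta_0$ and covariance $\Lambda_0$, and it preserves Gaussianity. Composing the two, $RN_{\Sigma^{-1}\mu,\Sigma^{-1}}\circ A$ applied to a Gaussian necessarily returns a Gaussian with mean $\Sigma^{-1}\mu$ and covariance $\Sigma^{-1}$, i.e.\ exactly $p_{eq}(\eta)=\mathcal{N}(\Sigma^{-1}\mu,\Sigma^{-1})$; hence $p_{eq}$ is a fixed point.

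To carry this out, I would first pin down $h$. Pushing $\eta\sim p_{eq}=\mathcal{N}(\Sigma^{-1}\mu,\Sigma^{-1})$ forward through $x=\Sigma\eta$ gives, since $\Sigma$ is symmetric, $x\sim\mathcal{N}(\Sigma\Sigma^{-1}\mu,\Sigma\Sigma^{-1}\Sigma)=\mathcal{N}(\mu,\Sigma)$, so $h(x)=\mathcal{N}(x;\mu,\Sigma)$. By Corollary~\ref{cor gaussian update}, the attention output for a query $\eta$ is $\nabla_\eta\log\int h(x)e^{x^\intercal\eta}\,dx=\mu+\Sigma\eta$, so (taking $A$ to include the residual connection) $A$ replaces each sample $\eta$ by $\eta'=\eta+\mu+\Sigma\eta=(I+\Sigma)\eta+\mu$. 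Because $\Sigma\succ0$ (it is the inverse of the covariance $\Sigma^{-1}$ of $p_{eq}$, hence invertible), $I+\Sigma\succ0$ is invertible, so $A\circ(p_{eq},h)$ is a non-degenerate Gaussian with mean $\bar\eta=(I+\Sigma)\Sigma^{-1}\mu+\mu=\Sigma^{-1}\mu+2\mu$ and covariance $\bar\Sigma=(I+\Sigma)\Sigma^{-1}(I+\Sigma)=\Sigma^{-1}+2I+\Sigma\succ0$; in particular $\bar\Sigma^{-1/2}$ exists, so $RN_{\Sigma^{-1}\mu,\Sigma^{-1}}$ is well defined on it.

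Next I would just apply the renormalization. It sends $\eta'\mapsto\eta''=\Sigma^{-1}\mu+\Sigma^{-1/2}\bar\Sigma^{-1/2}(\eta'-\bar\eta)$. Taking expectations, the $(\eta'-\bar\eta)$ term vanishes and $E[\eta'']=\Sigma^{-1}\mu$; taking covariances, $\mathrm{Cov}(\eta'')=\Sigma^{-1/2}\bar\Sigma^{-1/2}\,\bar\Sigma\,\bar\Sigma^{-1/2}\Sigma^{-1/2}=\Sigma^{-1/2}\Sigma^{-1/2}=\Sigma^{-1}$, using $\bar\Sigma^{-1/2}\bar\Sigma\,\bar\Sigma^{-1/2}=I$; and $\eta''$, being an affine image of the Gaussian $\eta$, is itself Gaussian. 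Hence $RN_{\Sigma^{-1}\mu,\Sigma^{-1}}\circ A\circ(p_{eq}(\eta),h(x))=\mathcal{N}(\Sigma^{-1}\mu,\Sigma^{-1})=p_{eq}(\eta)$.

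No step is genuinely hard: the whole theorem is the pair of observations in the first paragraph, namely that the attention sublayer is affine exactly when the intrinsic measure is Gaussian and that the renormalization collapses matched first and second moments onto the target. The only things worth a moment's care are the non-degeneracy of $\bar\Sigma$ (so that $\bar\Sigma^{-1/2}$ makes sense, which is immediate from $\Sigma\succ0$) and the reading of $\bar\Sigma$: the whitening step works only if $\bar\Sigma$ denotes the covariance $E[(\eta-\bar\eta)(\eta-\bar\eta)^\intercal]$, i.e.\ the centered second moment, which is how I use it. I would also remark that the conclusion does not depend on $A$ carrying a residual term: dropping it replaces $I+\Sigma$ by $\Sigma$, still invertible, and $RN$ erases the difference — the equilibrium property relies only on $A$ being an invertible affine map and $h$ being Gaussian.
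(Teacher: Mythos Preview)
Your proof is correct and follows essentially the same route as the paper: identify $h(x)=\mathcal{N}(x;\mu,\Sigma)$ via the pushforward $x=\Sigma\eta$, invoke Corollary~\ref{cor gaussian update} to see that the attention update is the affine map $\eta\mapsto\mu+(I+\Sigma)\eta$, and then observe that renormalization is another affine map that restores the target moments while preserving Gaussianity. You carry out more of the bookkeeping explicitly (the moments $\bar\eta,\bar\Sigma$ after the update, the covariance calculation under $RN$, and the non-degeneracy of $\bar\Sigma$), and you flag the reading of $\bar\Sigma$ as the centered second moment, but the underlying argument is the same two-step ``affine preserves Gaussian, renormalization fixes moments'' that the paper uses.
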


\begin{proof}
  The attention operator acts pointwise on each $\eta$ by $\eta' = \eta + \nabla_\eta \log \int h(x) e^{x^\intercal \eta} dx$. We can reparametrize $\eta$ as $ \eta = \Sigma^{-1} \mu + \Sigma^{ - \frac{1}{2} } \epsilon$, where $\epsilon \sim \mathcal{N}(0,I)$. Hence $x = \mu + \Sigma^{\frac{1}{2}} \epsilon$, which is the reparametrization of $\mathcal{N}(\mu,\Sigma)$. \\
  By Corollary $\autoref{cor gaussian update}$,
  \begin{equation}
    \eta' = \mu + ( I + \Sigma) \eta
    \label{gaussian attention update}
  \end{equation}
  This is an affine transformation of the natural parameter, and an affine transformation of a gaussian random variable is a gaussian random variable. The renormalization operator is another affine transformation which remaps it to a distribution with mean and covariance of $p_{eq}(\eta)$, and once again it must be a Gaussian distribution.

\end{proof}

\section{Hopfield Networks is All You Need comparison}
\label{hniayn}

One key difference is that for the Hopfield theory, the patterns are fixed, while for our theory the patterns are the changing hidden states. Notably later versions of the Hopfield paper added Hopfield layers with changing patterns, but theoretical proofs of the limiting behavior apply for fixed patterns. Arguably, the Hopfield analysis was not meant to answer how the attention sublayers transformed the input, and the focus on exponential storage capacity focused more on the ability of attention to quickly converge from noisy inputs to stored patterns.

Notably, while the encoder and decoder involve attention sublayers which use the hidden states as patterns, fitting our interpretation better, the decoder layers also add a second attention block which apply attention to the encoder input, fitting the Hopfield paper conditions.

From a conceptual perspective, the advantage of the original Hopfield network was that the patterns were stored in the weights. For modern Hopfield networks, the patterns must be stored separately, and they are not directly related to the transformer weights, removing the benefit of biological plausibility. 

The Hopfield theory requires a more specific set of assumptions, which justify the energy function, whereas our theory is based off of information theoretic ideas of a widely used free energy objective, which is dual to maximum entropy. We argue that our theory is more directly responsible for the theoretic interpretation of the Hopfield energy function and gradient updates and it does not rely on motivating factors behind Hopfield networks.

One mutual deficiency for both theories is that they do not explain the transformer attention weight transformations nor does it explain multihead attention. Not having to consider weight transformations, multihead attention, or FC layers makes equilibrium analysis much simpler. Unfortunately, the only parameters in the attention sublayer are in these weight transformations, requiring all knowledge of the data distribution to be encoded into these parameters - otherwise we might expect our language models to perform fairly well without any weights. However, it may be possible that some weights may be less sensitive or at least easier to train than others - linearly transforming the key and query spacies for the attention probabilities may focus on certain subspaces more than others, potentially leading to more efficient convergence, yet perhaps the Hopfield benefit of exponentially fast convergence may still apply regardless. In contrast, the value weights combine with the multihead attention linear transformation and the fully connected layer, leading to a much more nonconvex problem and training instability.

Dot product approximations for exponential dot product attention have been used, and perhaps those works have some connection to the original Hopfield network Ising model interpretation. Recent work \cite{} has further explored the connection between Hopfield networks and Restricted Boltzmann Machines. Perhaps the FC layer or linear attention can be seen as original Hopfield network layers and exponential dot product attention modern Hopfield network layers.

The Hopfield theory seems to attempt continuous integration of states, which requires a quadratic term in the energy to keep the partition function finite. The quadratic term also seems to fulfill another duty of modeling the skip connection. While we perform a discrete normalization, it is reasonable to assume that our set of hidden states are equivalent to hidden states from a continuous distribution. What this distribution is unclear, though it draws parallels to Arora's random walk of language around a context vector, which may be equivalent to McBal's mean-field approximation interpretation of transformers.